\documentclass{vldb}			
\usepackage{balance}  

\usepackage{balance}  
\usepackage{times}
\usepackage{amsfonts}
\usepackage{textcomp}
\usepackage{fixltx2e}
\usepackage{amsmath}
\usepackage{amssymb}
\usepackage{latexsym}
\usepackage{algorithm}
\usepackage{algorithmic}
\usepackage{multirow}
\usepackage{float}
\usepackage{color}
\usepackage{helvet}
\usepackage{courier}
\usepackage{graphicx}
\usepackage{subfig}
\usepackage{wrapfig}
\usepackage{framed} 
\usepackage[framed]{ntheorem}
\newframedtheorem{frm-thm}{Theorem}


\DeclareMathOperator*{\argmax}{arg\,max}

\makeatletter
\newcommand{\@BIBLABEL}{\@emptybiblabel}
\newcommand{\@emptybiblabel}[1]{}
\makeatother
\usepackage[hidelinks]{hyperref}
\usepackage{url}





\newcommand{\refalg}[1]{Algorithm~\ref{#1}}
\newcommand{\refeqn}[1]{Equation~\eqref{#1}}
\newcommand{\reffig}[1]{Figure~\ref{#1}}
\newcommand{\reftbl}[1]{Table~\ref{#1}}
\newcommand{\refsec}[1]{Section~\ref{#1}}
\newcommand{\method}[1]{\mbox{\textsc{#1}}}

\newcommand{\reals}{\mathbb{R}}

\newcommand{\budget}{\mathbb{B}}
\newcommand{\set}[1]{\mathcal{#1}}
\newcommand{\allhits}{\set{H}}
\newcommand{\hitset}[1]{\set{#1}}
\newcommand{\cconstraint}{\set{C}}
\newcommand{\inferset}{\set{I}}

\newcommand{\dom}[1]{\mathrm{Dom}(#1)}


\newcommand{\reminder}[1]{}
\newcommand{\rmd}[1]{}

\newcommand{\alter}[1]{}


\newcommand{\psl}{PSL}

\newcommand{\tasks}{BETs}
\newcommand{\task}{BET}

\newcommand{\overallmetric}{\Delta_{overall}}
\newcommand{\predicatemetric}{\Delta_{predicate}}

\newcommand{\system}{{KGEval}}
\newcommand{\systemfull}{KG-Evaluation}

\newcommand{\cgraphfull}{Evaluation Coupling Graph}
\newcommand{\cgraph}{ECG}

\newtheorem{theorem}{Theorem}


\title{\system{}: Estimating Accuracy of Automatically Constructed Knowledge Graphs }

\begin{document}



\numberofauthors{2} 

\author{
%
%
\alignauthor
Prakhar Ojha\\
       \affaddr{Indian Institute of Science}\\
       \email{prakhar.ojha@csa.iisc.ernet.in}
\alignauthor
Partha P Talukdar\\
       \affaddr{Indian Institute of Science}\\
       \email{ppt@serc.iisc.in}
}

\maketitle

\begin{abstract}

Automatic construction of large knowledge graphs (KG) by mining web-scale text datasets has received considerable attention recently.
Estimating accuracy of such automatically constructed KGs is a challenging problem due to their size and diversity. 
This important problem has largely been ignored in prior research -- we fill this gap and propose \system{}. 
%
%
\system{} binds facts of a KG using \textit{coupling constraints} and crowdsources the facts that \textit{infer} correctness of large parts of the KG.
We demonstrate that the objective optimized by \system{} is submodular and NP-hard, allowing guarantees for our approximation algorithm.
Through extensive experiments on real-world datasets, we demonstrate that \system{} is able to estimate KG accuracy more accurately compared to other competitive baselines, while requiring significantly lesser number of human evaluations.  

\end{abstract}

\section{Introduction}
\label{sec:intro}

Automatic construction of Knowledge Graphs (KGs) from Web documents has received significant interest over the last few years, resulting in the development of several large KGs consisting of hundreds of predicates (e.g., \textit{isCity}, \textit{stadiumLocatedInCity(Stadium, City)}) and millions of instances of such predicates called beliefs (e.g., \textit{(Joe Luis Arena, stadiumLocatedInCity, Detroit)}). Examples of such KGs include NELL \cite{NELL-aaai15}, Knowledge-Vault \cite{dong2014knowledge} etc. 


Due to imperfections in the automatic KG construction process, many incorrect beliefs are also found in these KGs. 
Overall accuracy of a KG can quantify the  effectiveness of its construction-process.
Having knowledge of accuracy for each predicate in the KG can highlight its strengths and weaknesses and provide targeted feedback for improvement.
Knowing accuracy at such predicate-level granularity is immensely helpful for Question-Answering (QA) systems that integrate opinions from multiple KGs \cite{samadi2015askworld}.
For real-time QA-systems, being aware that a particular KG is more accurate than others in a certain domain, say sports, helps in restricting the search over relevant and accurate subsets of KGs, thereby improving QA-precision and response time.
In comparison to the large body of recent work focused on construction of KGs, the important problem of accuracy estimation of such large KGs is unexplored -- we address this gap in this paper.

True accuracy of a KG (or a predicate in the KG) may be estimated by aggregating human judgments on correctness of each and every belief in the KG (or predicate)\footnote{Please note that the belief evaluation process can't be completely automated. If an algorithm could accurately predict correctness of a belief, then the algorithm may as well be used during KG construction rather than during evaluation.}. Even though crowdsourcing marketplaces such as Amazon Mechanical Turk (AMT)\footnote{ \scriptsize AMT:
 \url{http://www.mturk.com}
} provide a convenient way to collect human judgments, accumulating such judgments at the scale of larges KGs  is prohibitively expensive. We shall refer to the task of manually classifying a single belief as true or false as a Belief Evaluation Task (\task{}). 
%
%
Thus, the crucial problem is:
	{
		\textit{How can we select a subset of beliefs  to evaluate which will give the best estimate of true (but unknown) accuracy of the overall KG and various predicates in it?} 
	}
	
A naive and popular approach is to evaluate randomly sampled subset of beliefs from the KG. 
Since random sampling ignores relational-couplings present among the beliefs, it usually results in oversampling and poor accuracy estimates. 
Let us motivate this through an example.
	
\begin{figure}[t]
\begin{center}
	\includegraphics[width=0.48\textwidth]{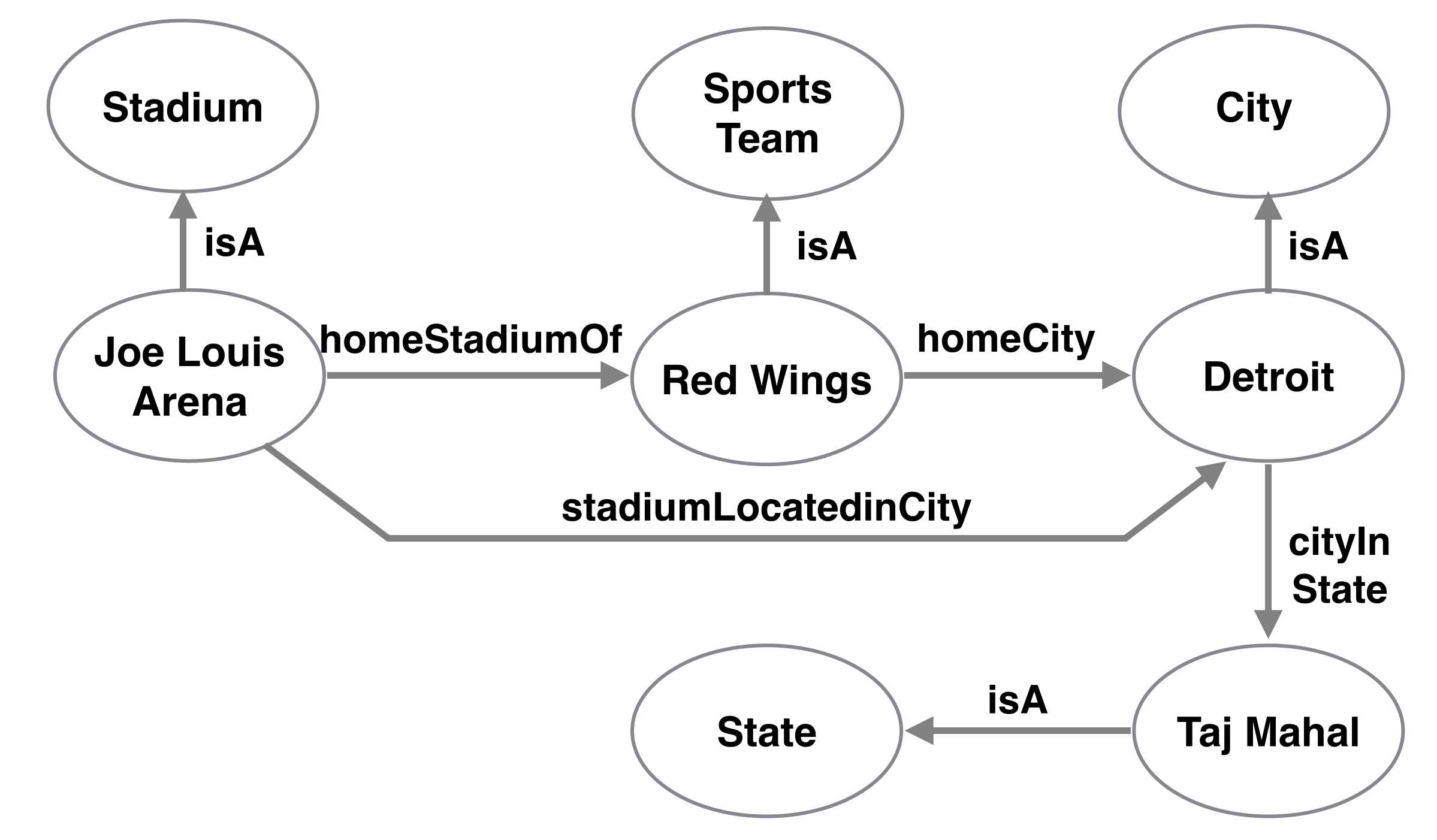}
	\caption{\label{fig:motivating_exmple}{\small Knowledge Graph (KG) fragment used as motivating example in the paper. \system{}, the method proposed in this paper, is able to estimate the true accuracy 75\% of this KG by evaluating only three beliefs (out of 8). In contrast, random evaluation, a popular alternative, gives an estimate of 66.7\% after evaluating same number of beliefs. Please see \refsec{sec:intro} for details.
	}}
\end{center}
\end{figure}

\begin{figure*}[t]
	\begin{center}
	\includegraphics[width=0.99\textwidth]{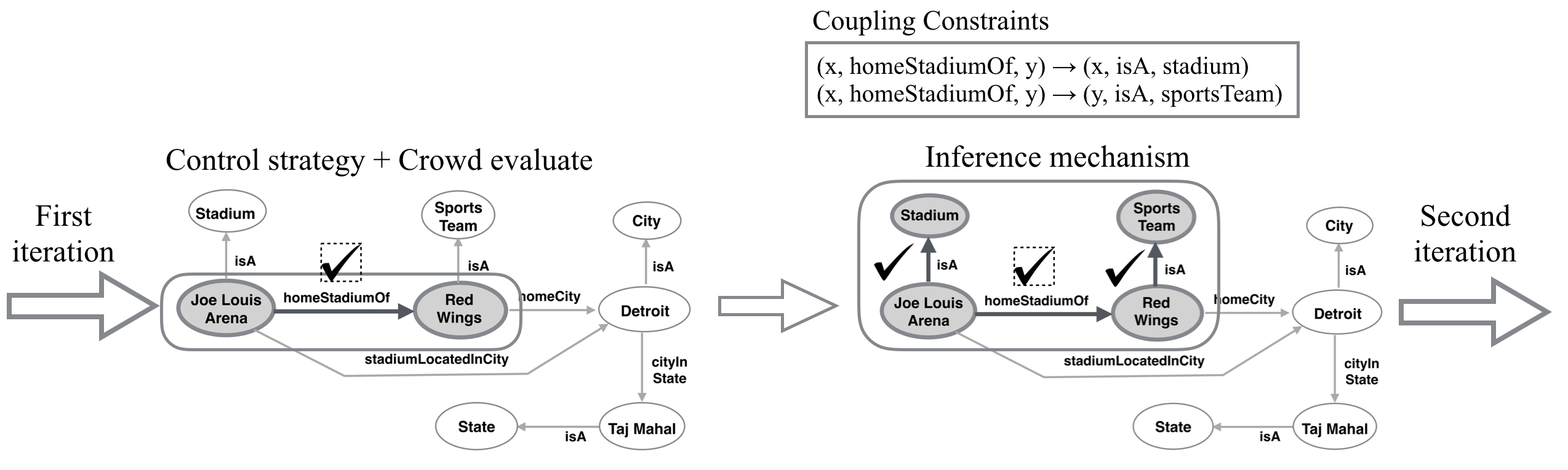}
		\caption{\label{fig:pipeline}\small Demonstration of one iteration of \system{}.
		Control mechanism selects a belief whose correctness is evaluated from crowd. In the above example,  \textit{(J.L. Arena, homeStadiumOf, Red Wings)} is crowd-evaluated to be true (indicated by tick with dotted square). 
		The inference mechanism take this evaluation along with coupling constraints among beliefs (\refsec{subsec:ECG}) to \textit{infer} correctness of other beliefs (shown by tick without square). All such crowdsourced and inferred evaluations are aggregated to get accuracy estimate. This iterative process continues until convergence.
		(\refsec{subsec:overview} and \refsec{sec:model}).
		} 
	\end{center}
\end{figure*}

\subsubsection*{Motivating example:} 
We motivate efficient accuracy estimation through the KG fragment shown in \reffig{fig:motivating_exmple}. 
Here, each belief of the form (\textit{RedWings, isA, SportsTeam}) is an edge in the graph.
There are six correct and two incorrect beliefs (the two incident on \textit{Taj Mahal}), resulting in an overall accuracy of \mbox{$75\%(=6/8)$} which we would like to estimate. Additionally, we would also like to estimate accuracies of the predicates:  \textit{homeStadiumOf}, \textit{homeCity}, \textit{stadiumLocatedInCity}, \textit{cityInState} and \textit{isA}. 

%

	
We now demonstrate how \textit{coupling constraints} among beliefs may be exploited for faster and more accurate accuracy estimation. 
\emph{Type consistency} is one such coupling constraint.
For instance, we may know from KG ontology that the \textit{homeStadiumOf} predicate connects an entity from the \textit{Stadium} category to another entity in the \textit{Sports Team} category. 
Now, if \textit{(Joe Louis Arena, homeStadiumOf, Red Wings)} is sampled and is evaluated to be correct, then from these \textit{type constraints} we can infer that \textit{(Joe Louis Arena, isA, Stadium)} and \textit{(Red Wings, isA, Sports Team)} are also correct.  
Similarly, by evaluating \textit{(Taj Mahal, isA, State)} as false, we can infer that \textit{(Detroit, cityInState, TajMahal)} is incorrect. Please note that even though type information may have been used during KG construction and thereby the KG itself is type-consistent, it may still contain incorrect type-related beliefs. 
An example is the \textit{(Taj Mahal, isA, State)} belief in the type-consistent KG in \reffig{fig:motivating_exmple}.
%
Additionally, we have \textit{Horn-clause coupling constraints} \cite{NELL-aaai15,lao2011random}, such as \textit{homeStadiumOf(x, y) $\land$ homeCity(y, z) $\rightarrow$ stadiumLocatedInCity(x, z)}.
By evaluating \textit{(Red Wings, homeCity, Detroit)} and applying this horn-clause to the already evaluated facts mentioned above, we infer that \textit{(Joe Louis Arena, stadiumLocatedInCity, Detroit)} is also correct.
We explore generalized forms of  these constraints in \refsec{subsec:ECG}.

Thus, evaluating only three beliefs, and exploiting constraints among them, we exactly estimate the \textit{overall} true accuracy as $75\%$ and also cover all predicates.
In contrast, the empirical accuracy by randomly evaluating three beliefs, averaged over 5 trials, is  $66.7\%$.  

\subsubsection*{Our contributions}
We make the following contributions in this paper:


\begin{itemize}
	\item Initiate a systematic study into the unexplored yet important problem of evaluation of automatically constructed Knowledge Graphs.     
	\item We present \system{}, a novel crowdsourcing-based system for estimating accuracy of large knowledge graphs (KGs). \system{} exploits dependencies among beliefs for more accurate and faster KG accuracy estimation.	
	
	\item Demonstrate \system{}'s effectiveness through extensive experiments on real-world KGs, viz.,  NELL \cite{NELL-aaai15} and Yago \cite{suchanek2007yago}. We also evaluate \system{}'s robustness and scalability. 
\end{itemize}  
All the data and code used in the paper will be made publicly available upon publication.

%

\section{Overview and Problem Statement}
\label{sec:probform}


\subsection{\system{}: Overview}
\label{subsec:overview}

The core idea behind \system{} is to estimate correctness of as many beliefs as possible while evaluating only a subset of them using humans through crowdsourcing. 
\system{} achieves this using an iterative algorithm which alternates between the following two stages:
		\begin{itemize}
			\item {\bf Control Mechanism} (\refsec{sec:control}): In this step, \system{} selects the belief which is to be evaluated next using crowdsourcing.
			\item {\bf Inference Mechanism} (\refsec{subsec:eval_prop}): Here \system{} uses coupling constraints and beliefs  evaluated so far to automatically estimate  correctness of additional beliefs.
		\end{itemize}

\noindent
This iterative process is repeated until there are no more beliefs to be evaluated, or until a pre-determined number of iterations are processed. 
One iteration of  \system{} over the KG fragment from \reffig{fig:motivating_exmple} is shown in \reffig{fig:pipeline}. 
Firstly, \system{}'s control mechanism selects the belief \textit{(John Louis Arena, homeStadiumOf, Red Wings)}, whose evaluation is subsequently crowdsourced. 
Next, the inference mechanism uses the so evaluated belief and type coupling constraints (not shown in figure) to infer that \textit{(John Louis Arena, isA, Stadium)} and \textit{(Red Wings, isA, Sports Team)} are also true. In the next iteration, the control mechanism selects another belief for evaluation, and the process continues.

%

\begin{table}[t]
\begin{small}
\begin{center}

  \begin{tabular}{|p{2.7cm}|p{4.5cm}|}
 \hline
 Symbol & Description  \\

 \hline

 $\allhits{} = \{h_1, \ldots, h_n\}$ & Set of all $n$ Belief Evaluation Tasks (\tasks{})\\
 
 
 $c(h) \in \reals_{+}$ &  Cost of labeling $h$ from crowd\\


 
 
 $\cconstraint{} = \{(\cconstraint{}_i, \theta_i)\}$ &  Set of coupling constraints $\cconstraint{}_i$ with weights $\theta_i \in \reals_{+}$\\
   
 $t(h) \in \{0, 1\}$ &  True label of $h$\\  
  
 $l(h) \in \{0, 1\}$ &  Estimated evaluation label of $h$\\ 

$\allhits{}_i = \dom{\cconstraint_i{}} $ & $\allhits{}_i \subseteq \allhits{}$ which participate in $\cconstraint{}_i$ \\

$G = (\allhits{} \cup \cconstraint{}, \set{E})$ &  Bipartite Evaluation Coupling Graph. $e\in\set{E}$ between $\allhits{}_j$ and $\cconstraint{}_j$ denotes $\allhits{}_j \in \dom{\cconstraint_j{}}$  .\\

 $\set{Q} \subseteq \allhits{}$ &  \tasks{} evaluated using crowd \\
 
 $ \inferset\big(G,\set{Q} \big) \subseteq \allhits{}$ & Inferable Set for evidence $\set{Q}$: \tasks{} for which labels can be inferred by inference algorithm \\
  
 \mbox{$\Phi(\set{Q})=$}  $~~~~~~~~\frac{1}{|\set{Q}|} \sum_{h \in \set{Q}} t(h)$ & True accuracy of evaluated \tasks{} $\set{Q}$ \\

 \hline

\end{tabular}
 \caption{\small \label{tbl:notations} Summary of notations used (\refsec{subsec:notation}).} 
\end{center}
\end{small}	

\end{table}
%
%

\subsection{Notations and Problem Statement}
\label{subsec:notation}


We are given a KG with $n$ beliefs. 
Evaluating a single belief as true or false forms a Belief Evaluation Task (\task{}). 
Coupling constraints are derived by determining relationships among \tasks{}, which we further discuss in \refsec{subsec:ECG}. 
Notations used in the rest of the paper are summarized in \reftbl{tbl:notations}.

\textit{Inference algorithm} helps us work out evaluation labels of other \tasks{} using constraints $\cconstraint{}$. 
For a set of already evaluated \tasks{} $\set{Q} \subseteq \allhits{}$, we define \textit{inferable set} 
$ \inferset\big(G,\set{Q} \big) \subseteq \allhits{}$ as \tasks{} whose evaluation 
labels can be deduced by the inference algorithm.
We calculate the average true accuracy of a given set of evaluated \tasks{} $\set{Q} \subseteq \inferset\big(G,\set{Q} \big) \subseteq \allhits$ by 
$\Phi(\set{Q}) = \frac{1}{|\set{Q}|} \sum_{h \in \set{Q}} t(h)$.


\system{} aims to sample and crowdsource a \task{} set $\hitset{Q}$ with the largest inferable set, 
and solves the optimization problem:
	\begin{eqnarray}
		\arg\max_{\hitset{Q} \subseteq \allhits}~\left|\inferset\big(G,\set{Q} \big)\right|
		\label{eqn:rc_obj}
	\end{eqnarray} 
\noindent

\section{\system{}: Method Details}
\label{sec:model}

	\begin{figure}[t]
	\begin{center}		\includegraphics[width=0.4\textwidth]{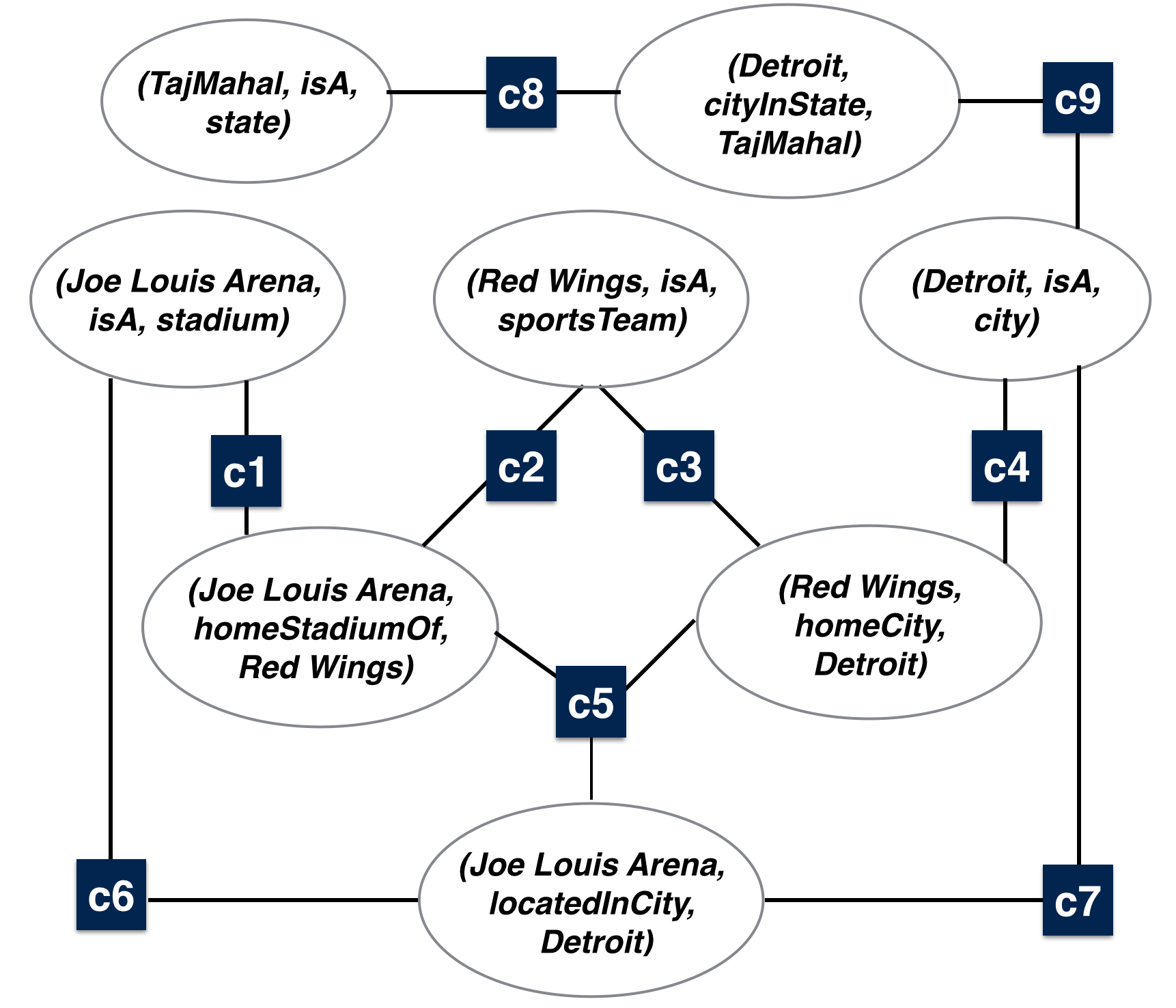}
	
	\caption{\label{fig:motivating_exmple_fg}{\small \cgraphfull{} (\cgraph{}) constructed for example in  \reffig{fig:motivating_exmple}. 
Each edge (\task{}) in  \reffig{fig:motivating_exmple} is represented as a node here.
	(\refsec{sec:ecg_construct})}}
	\end{center}
	\end{figure}




In this section, we describe various components of \system{}. First, we present coupling constraint details in \refsec{subsec:ECG}. 
Instead of working directly over the KG, \system{} combines all KG beliefs and constraints into an \textit{Evaluation Coupling Graph (ECG)}, and works directly with it. 
Construction of ECG is described in \refsec{sec:ecg_construct}. 
Inference and Control mechanisms are then described in \refsec{subsec:eval_prop} and \refsec{sec:control}, respectively. 
The overall algorithm is presented in \refalg{alg:KGEalgo}.

\subsection{Coupling Constraints}
\label{subsec:ECG}

\noindent
%
Beliefs are coupled in the sense that their truth labels are dependent on each other.
We use any additional coupling information like metadata, KG-ontology etc., to derive coupling constraints that  can help infer evaluation label of a \task{} based on labels of other \task{}(s). 
In this work, we derive constraints $\cconstraint{}$ from the KG ontology and link-prediction algorithms, such as PRA \cite{lao2011random} over NELL and AMIE \cite{galarraga2013amie} over Yago.
These rules are jointly learned over entire KG with millions of facts and we assume them to be true.
The alternative of manually identifying such complex rules would be very expensive and unfeasible.

We use conjunction-form first-order-logic rules and refer to them as \textit{Horn clauses}. 
Examples of a few such coupling constraints are shown below.
\begin{center}
%
%

$\cconstraint_{2}$: \textit{(x, homeStadiumOf, y)} $\rightarrow$ \textit{(y, isA, sportsTeam)} \\
$\cconstraint_{5}$: \textit{(x, homeStadiumOf, y)} $\land$ \textit{(y, homeCity, z)} $\to$ \textit{(x, stadiumLocatedInCity, z)} 

\end{center}
%
%
Each coupling constraint $\cconstraint{}_i$ operates over $\allhits{}_i \subseteq \allhits{}$ to the left of its arrow and infers label of the \task{} on the right of its arrow.
\tasks{} participating in $\cconstraint{}_i$ are referred to as its domain $\dom{\cconstraint_i{}}$.
%
%
$\cconstraint_{2}$ enforces \textit{type consistency} and  $\cconstraint_5$ is an instance of PRA path that conveys if a stadium $S$ is home to a certain team $T$ based in city $C$, is itself located in city $C$.
These constraints have also been successfully employed earlier during knowledge extraction \cite{NELL-aaai15} and integration \cite{pujara2013knowledge}. 

Note that these constraints are directional and inference propagates in forward direction. 
For example, inverse of $\cconstraint_2$, i.e.,  correctness of \textit{(RedWings, isA, sportsTeam)} does not give any information about the correctness of \textit{(JoeLouisArena, homeStadiumOf, RedWings)}.

\subsection{\cgraphfull{} (\cgraph{})}
\label{sec:ecg_construct}

\noindent
	Given $\allhits{}$ and $\cconstraint{}$, we construct a graph with two types of nodes: (1) a node for each \task{} $h\in\allhits{}$, and (2) a node for each constraint $\cconstraint{}_i\in\cconstraint{}$. 
Each $\cconstraint{}_i$ node is connected to all $h$ nodes that participate in it. 
We call this graph the \textit{\cgraphfull{}} (\textit{\cgraph{}}), represented as $G = (\allhits{} \cup \cconstraint{}, \set{E})$ 
with set of edges 
$\set{E} = \{(\cconstraint{}_i, h)~| ~h \in \dom{\cconstraint{}_i}~\forall \cconstraint{}_i \in \cconstraint\}$. 
Note that \cgraph{} is a bipartite \textit{factor graph}  \cite{kschischang2001factor} with $h$ corresponding to variable-nodes and $\cconstraint_i{}$ corresponding to factor-nodes. 
	
\reffig{fig:motivating_exmple_fg} shows \cgraph{} constructed out of the motivating example in  \reffig{fig:motivating_exmple} with $|\cconstraint{}| = 8$ and separate nodes for each of the edges (beliefs or \tasks{}) in KG. 
We pose the KG evaluation problem as classification of \task{} nodes in the \cgraph{} by allotting them a label of $1$ or $0$ to represent true or false respectively.



\subsection{Inference Mechanism}
\label{subsec:eval_prop}
Inference mechanism helps propagate true/false labels of evaluated beliefs to other non-evaluated beliefs using available coupling constraints \cite{bragg2013crowdsourcing}. 
We use Probabilistic Soft Logic (\psl{})\footnote{ \scriptsize
\url{http://www.psl.umiacs.umd.edu}
} \cite{broecheler:uai10} as our inference engine to implement propagation of evaluation labels. 
\psl{} is a declarative language to reason uncertainty in relational domains via first order logic statements. 
Below we briefly describe the internal workings of \psl{} with respect to the accuracy estimation problem.

\subsubsection*{PSL Background} \psl{} relaxes each conjunction rule $\cconstraint_j{}$ to $\reals{}$ using \textit{Lukaseiwicz t-norms}: $a \land b\to max\{0,a+b-1\}$.
Potential function $\psi_j$ is defined for each $\cconstraint_j{}$ using this norm and it depicts how satisfactorily constraint $\cconstraint_j{}$ is satisfied.
For example, $\cconstraint_5$ mentioned earlier is transformed from first-order logical form to a real valued number by
\begin{equation}
\label{eqn:potential}
\psi_j(\cconstraint_5{}) =  \big(\max \{0, h_x+h_y-1-h_w\}\big)^{2}
\end{equation}
%
%
where $\cconstraint_5{} = h_x \land h_y \to h_w $, where $h_x$ denotes the evaluation score $\in [0,1]$ associated with 
the \task{} \textit{(Joe Louis Arena, homeStadiumOf, Red Wings)}, $h_y$  corresponds to \textit{(Red Wings, homeCity, Detroit)} and $h_w$ to \textit{(Joe Louis Arena, locatedInCity, Detroit)}.
Higher value of potential function $\psi_j$ represents lower fit for constraint $\cconstraint_j{}$. 


The probability distribution over label assignment is so structured such that labels which satisfy more coupling constraints are more probable. 
Probability of any label assignment  
$\Omega\big(\allhits{}\big) \in \{0, 1\}^{|\allhits{}|}$
over \tasks{} in $G$ is given by
\begin{equation}
    \label{eqn:MLE}
	\mathbb{P}\big(\Omega\big(\allhits{}\big)\big) = \frac{1}{Z} \exp 
	\Big[ -\sum\limits_{j=1}^{|\cconstraint{}|} \theta_j \psi_j\big(\cconstraint_j{}\big) \Big]
\end{equation}
where 
$Z$ is the normalizing constant and $\psi_j$ corresponds to potential function acting over \tasks{} $h\in \dom{\cconstraint_j{}}$. 
%
Final assignment of $\Omega(\allhits{})_{PSL} \in \{1,0\}^{|\allhits{}|}$ labels is obtained by solving the \textit{maximum a-posteriori (MAP)}  optimization problem
\begin{center}
\label{eqn:MAP}
$\Omega\big(\allhits{}\big)_{PSL} =  \argmax\limits_{\Omega(\allhits{})} ~\mathbb{P}\bigg(\Omega\big(\allhits{}\big)\bigg)$
\end{center}

We denote by $M_{PSL}(h, \gamma) \in [0, 1]$ the PSL-estimated score for label $\gamma \in \{1, 0\}$ on \task{} $h$ in the optimization above.

{\bf Inferable Set using PSL}: We define  estimated label for each BET $h$ as shown below.
\begin{small}
\begin{eqnarray*}
	l(h) =
	\begin{cases}
		1~~\mathrm{if}~~M_{PSL}(h, 1) \ge \tau \\
		0~~\mathrm{if}~~M_{PSL}(h, 0) \ge \tau \\
		\varnothing~~\mathrm{otherwise}
	\end{cases}	
\end{eqnarray*}
\end{small}
where threshold $\tau$ is system hyper-parameter. Inferable set is composed of \tasks{} for which inference algorithm (PSL) confidently propagates labels. 
\begin{center}
	$\inferset(G, \set{Q}) = \{h \in \allhits{}~|~l(h) \ne \varnothing\}$
\end{center}
%
Note that two \task{} nodes from \cgraph{} can interact with varying strengths through different constraint nodes; this multi-relational structure requires soft probabilistic propagation.


\subsection{Control Mechanism}
\label{sec:control}


Control mechanism selects the \task{} to be crowd-evaluated at every iteration. 
We first present the following two theorems involving \system{}'s optimization in \refeqn {eqn:rc_obj}. 
Please refer to the Appendix for proofs of both theorems. 
\subsubsection*{Submodularity}

\begin{frm-thm}
\label{thm:submod}
The function optimized by \system{} (\refeqn {eqn:rc_obj}) using the PSL-based inference mechanism is submodular \cite{lovasz1983submodular}.
\end{frm-thm}

Intuitively, the amount of additional utility, in terms of label inference, obtained by adding a \task{} to larger set is lesser than adding it to any smaller subset. 
\noindent
The proof follows from the fact that all pairs of \tasks{} satisfy the regularity condition \cite{jegelka2011submodularity,kolmogorov2004energy}, further used by a proven conjecture \cite{kempe2003maximizing,mossel2007submodularity}.
Refer Appendix \refsec{sec:appendix} for detailed proof.

\subsubsection*{NP-Hardness}

\begin{frm-thm}
\label{thm:nphard}
The problem of selecting optimal solution in \system{}'s optimization (\refeqn {eqn:rc_obj}) 
is NP-Hard.
\end{frm-thm}

Proof follows by reducing NP-complete Set-cover Problem (SCP) 
to selecting $\set{Q}$ which covers ${\inferset}(G, \set{Q})$.

Given a partially evaluated ECG, the control mechanism aims to select the next set of \tasks{} which should be evaluated by the crowd. However, before going into the details of the control mechanism, we state a few properties involving \system{}'s optimization in \refeqn {eqn:rc_obj}.

\subsubsection*{Justification for Greedy Strategy} From Theorem \ref{thm:submod} and \ref{thm:nphard}, we observe that the function optimized by \system{} is NP-hard and submodular. 
Results from \cite{nemhauser1978analysis} prove that greedy hill-climbing algorithms solve such maximization problem within an approximation factor of $(1-1/e) \approx$ 63\% of the optimal solution. Hence, we adopt a greedy strategy as our control mechanism. 
We 
iteratively select the next \task{} which gives the greatest increase in size of inferable set.

In this work, we do not integrate all aspects crowdsourcing techniques like aggregating labels, worker's quality estimation etc.
However, we acknowledge their importance and hope to pursue them in our future works.
In Appendix \ref{sec:budget_alloc}, we present a majority-voting based mechanism to handle noisy crowd workers
under limited budget.
We propose a strategy to redundantly post each \task{} to multiple workers and bound their estimation error.

%
	\begin{algorithm}[t]
   \caption{\label{alg:KGEalgo}\system{}: Accuracy Estimation of Knowledge Graphs}
    \begin{algorithmic}[1]    
      \REQUIRE{$\allhits$: \tasks{}, $\cconstraint$: coupling constraints, $\budget$:  assigned budget, $\set{S}$: seed set, $c(h)$: \task{} cost function,   $\Phi$: \task{} categorization score aggregator}
	  \STATE $G = \method{BuildECG}(\allhits{}, \cconstraint{})$

	  \STATE $B_r = \budget$
	  \STATE $\hitset{Q}_0 =  \set{S}$, $t = 1$
      \REPEAT  
      	\STATE $h^{*} = \arg\max_{h \in \hitset{H} - Q} \left|\inferset(G, \hitset{Q}_{t-1} \cup \{h\})\right|$
      	\STATE $\method{CrowdEvaluate}(h^{*})$
      	\STATE $\method{RunInference}(\hitset{Q}_{t-1} \cup h^{*})$      	
      	\STATE $\hitset{Q}_t = \inferset(G,\hitset{Q}_{t-1} \cup \{ h^{*}\})$
      	\STATE $B_r = B_r - c(h^{*})$
		\STATE $\hitset{Q} =\hitset{Q} \cup \hitset{Q}_t$     
		\IF{$\hitset{Q} \equiv \allhits$}
    		\STATE \method{Exit}
   	    \ENDIF
 	    
 	    \STATE $\mathrm{Acc_t} = \frac{1}{|\set{Q}|} \sum_{h \in \set{Q}} l(h)$
 	    \STATE $t = t + 1$
      \UNTIL {\method{Convergence}}
      %
          
            \RETURN $\mathrm{Acc_t}$

	\end{algorithmic}
	\end{algorithm}

\subsection{Bringing it All Together}
\label{sec:overall_algo}

\refalg{alg:KGEalgo} presents \system{}. 
%
In Lines 1-3, we build the \cgraphfull{} $G = (\allhits{} \cup \cconstraint{}, \set{E})$ and use the labels of seed set $\set{S}$ to initialize $G$.
In lines 4-16, we repetitively run our inference mechanism, until either the 
accuracy estimates have converged,
or all the \tasks{} are covered. 

\noindent
\subsubsection*{Convergence }
In this paper, we define convergence whenever the variance of sequence of accuracy estimates [ $\mathrm{Acc_{t-k}}$,  $\hdots$,  $\mathrm{Acc_{t-1}}$, $\mathrm{Acc_t}$] is less than $\alpha$. 
We set $k=9$ and $\alpha=0.002$ for our experiments.
%

\noindent
In each iteration, the \task{} with the largest inferable set is identified and evaluated using crowdsourcing (Lines 5-6).  
The new inferable set $\hitset{Q}_t$ is estimated. 
These automatically annotated nodes are added to $\hitset{Q}$ (Lines 7-10). 
Finally, average of all the evaluated \task{} scores is returned as the estimated accuracy.

\section{Experiments}
\label{sec:expmnt}

To assess the effectiveness of \system{}, we ask the following questions:
\begin{itemize}
	\item How effective is \system{} in estimating KG accuracy, both at predicate-level and at overall KG-level? (\refsec{sec:main_result}). 
	\item What is the effect of coupling constraints on its performance? (\refsec{sec:effective_constraint}).
	\item And lastly, how robust is \system{} to estimating accuracy of KGs with varying quality? (\refsec{sec:ablationExp}).

\end{itemize}

	

\subsection{Setup}
\label{sec:setup}

\begin{table}[thb]
\begin{small}
\begin{center}

%
%
 

\begin{tabular}{ |c|c|c| }
 \hline
  Evaluation set & $\allhits{}_N$ {\small (NELL)}  & $\allhits{}_Y$ {\small (Yago2)} \\
 \hline
	\#\tasks{} & $1860$ & $1386$ \\
	\#Constraints & $|\cconstraint_N|= 130$ & $|\cconstraint_Y|= 28$ \\
	\#Predicates & $18$ & $16$ \\
	Gold Acc. & $91.34\%$ & $99.20\%$ \\
 \hline
\end{tabular}

 \caption{\small \label{tbl:AMT} Details of \task{} subsets used for accuracy evaluation. 
 (\refsec{subsec:crdsrcing}). 
 }
\end{center}
\end{small}
\end{table}

\vspace{-3mm}
\subsubsection{Setup}
\label{subsec:datasets}
	\subsubsection*{Datasets} We consider two automatically constructed KGs, NELL and Yago2 for experiments. From NELL\footnote{ 
	 	\scriptsize
	 	\url{http://rtw.ml.cmu.edu/rtw/resources}}, we choose a sub-graph of sports related beliefs \textbf{NELL-sports}, mostly pertaining to athletes, coaches, teams, leagues, stadiums etc. 
	 	We construct coupling constraints set $\mathbf{\cconstraint_N{}}$ using top-ranked PRA inference rules for available predicate-relations \cite{lao2011random}. 
The confidence score returned by PRA are used as weights $\theta_i$. We use NELL-ontology's predicate-signatures to get information for \textit{type} constraints. 
	 	%
	 	%
	 	%
	 	We also select \textbf{YAGO2-sample}\footnote{
	 	\scriptsize
	 	\url{https://www.mpi-inf.mpg.de/departments/databases-and-information-systems/research/yago-naga/}
	 	}  
	 	, which unlike NELL-sports, is not domain specific.
			 We use AMIE horn clauses \cite{galarraga2013amie} to construct multi-relational coupling constraints $\mathbf{\cconstraint_Y{}}$. For each $\cconstraint_i{}$, the score returned by AMIE is used as rule weight $\theta_i$.
	%
	 	 \reftbl{tbl:AMT} reports the statistics of datasets used, their true accuracy and number of coupling constraints.

\subsubsection*{Size of evaluation set} In order to calculate accuracy, we require gold evaluation of all beliefs in the evaluation set. Since obtaining gold evaluation of the entire (or large subsets of) NELL and Yago2 KGs will be prohibitively expensive, we sample a subset of these KGs for evaluation. Statistics of the evaluation datasets are shown in \reftbl{tbl:AMT}.  
We instantiated top ranked PRA and AMIE first-order-conjunctive rules over NELL-sports and Yago datasets respectively and  their participating beliefs form $\allhits_N{}$ and $\allhits_Y{}$. 

\subsubsection*{Initialization} \refalg{alg:KGEalgo} requires initial seed set $\set{S}$ which 
we generate by randomly evaluating $|\set{S}|=50$ beliefs from $\allhits{}$. 
All baselines start from $\set{S}$.
We perform all inference calculations keeping soft-truth values of \tasks{}. 
For asserting true (or false) value for beliefs, we set a high soft label confidence threshold at $\tau=0.8$ (see \refsec{subsec:eval_prop}). 
We do not take KG's belief confidence scores into account, as many a times KGs are confidently wrong.

\noindent
\textbf{Problem of cold start: }
\refalg{alg:KGEalgo} assumes that we are given few evaluated \tasks{} $\set{S}$ as initial seed set.
This initial evidence helps \psl{} learn posterior for rule weights and tune specific parameters.
In absence of seed set, we can generate $\set{S}$ by  randomly sampling few \tasks{} from distribution.
However, getting \tasks{} $h \in \{\set{S}\}$ crowd evaluated  also incurs cost and we must use budget judiciously to sample only enough tasks and let \system{} run thereafter. 
In all our experiments below, we have considered cold-start setting and randomly sampled $|\set{S}|=50$ tasks to train data specific parameters.

Note that random \tasks{} in $\set{S}$ are sampled from the true distribution of labels, say $\mathbb{D}$.
Running \psl{} directly over $\set{S}$ changes this distribution $\mathbb{D}$ to $\hat{\mathbb{D}}$ due sparse inferences, leading to undesirable skewing.
Hence, before calling the iterative \system{} routine, we do one-time normalization of the scores assigned by inference engine (\psl{} in our case)  to establish concordance between the accuracy estimate after initial random sampling $\set{S}$ and that of just after first iteration of \psl{}, i.e we try to make $\mathbb{D} \approx \hat{\mathbb{D}}$.  We applied class mass normalization as 
\begin{equation*}
\hat{p}(c|h) = \frac{\frac{q_c}{p_c}~p(c|h)}{\sum_{\forall c'}\frac{q_{c'}}{p_{c'}}~p(c'|h)}
\end{equation*}
where $p(c|h)$ is the probability of obtaining class $c \in \{0,1\}$ for a given \task{} $h$, $p_c$ is the current accuracy estimate by \psl{} inference and $q_c$ is estimate of initial random samples from $\set{S}$.


\subsubsection*{Approximate Control Mechanism}
To find the best candidate in Line 5 of \refalg{alg:KGEalgo}, inference engine runs over all the remaining (unevaluated) \tasks{}. 
Even in case of modest-sized $G$, this may not always be computationally practical.
We thus reduce this search space to a smaller set $\hitset{H'}$ and further distinguish $h^*$ by explicit calculation of $|\inferset(G,\set{Q}\cup h )|$. We observed that \tasks{} adjacent to greater number of \textit{unfulfilled} constraints in \cgraph{} propose suitable candidates for $\hitset{H'}$, where \textit{unfulfilled} constraints are  those $\cconstraint_k{}$'s which have at least one adjacent \task{} $h_k \notin \inferset(G,\set{Q} )$. 
We also experimentally validated this heuristic by varying the size to $|\hitset{H'}| \in \{3, 5, 7\}$, and observed that this caused negligible change in performance, indicating that the heuristic is indeed effective and stable. We use this strategy for all the \system{}-based experiments in this paper. Please note that \system{} without any approximation will only improve performance further, thereby making  conclusions of the paper even stronger.

	 We will release the NELL-sports and YAGO datasets used, their crowdsourced labels, coupling constraints, PRA and AMIE rules and code used for inference and control in this paper upon publication. 

\subsubsection*{Acquiring Gold Accuracy and Crowdsourcing \tasks{}}

 \label{subsec:crdsrcing}

\begin{figure}[t]
	\begin{center}
		\includegraphics[scale=0.28]{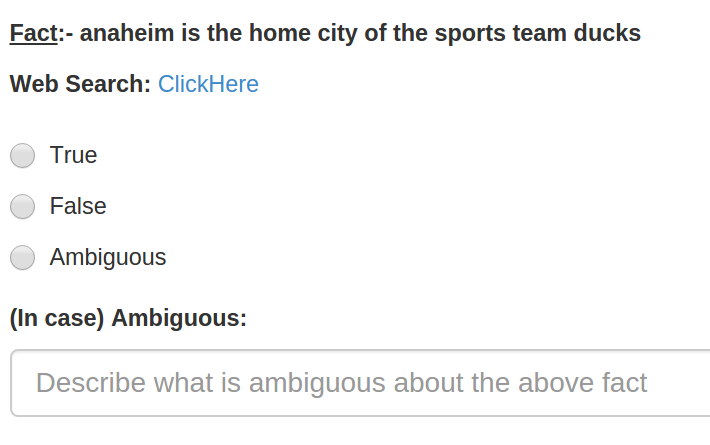}
		\caption{\small \label{fig:amt}Sample \task{} posted to Mechanical turk for evaluating  \textit{(ducks, homeCity, anaheim)} belief edge in KG. (see \refsec{subsec:crdsrcing})}
	\end{center}
	\end{figure}

To compare \system{} predictions against human evaluations, we evaluate all \tasks{} $ \{\allhits_N{} \cup \allhits_Y{}\}$ on AMT.
For the ease of workers, we translate each \textit{entity-relation-entity} belief into human readable format before posting to crowd. 
For instance, \textit{(Joe Louis Arena, homeStadiumOf, Red Wings)} was rendered in \tasks{} as \textit{``Stadium Joe Louis Arena is home stadium of sports team Red Wings"} and asked to label true or false. 
To further capture strange cases of machine extractions, which might not make sense to an abstracted worker, we gave the option of classifying fact as `Ambiguous' which later we  disambiguated ourselves.
Web search hyperlinks were provided to aid the workers in case they were not sure of the answer. 
\reffig{fig:amt} shows a sample \task{} posted on AMT.

We published \tasks{} on AMT under `classification project' category. 
AMT platform suggests high quality \textit{Master-workers} who have had a good performance record with classification tasks. 
We hired these AMT recognized master workers for high quality labels and paid \$0.01 per \task{}. 
We correlated the labels of master workers and expert labels on 100 random beliefs and compared against three aggregated non-master workers.
We observed that master labels were better correlated ($93\%$) as compared to non-masters ($89\%$),  incurring one-third the cost.
%
%
Consequently we consider votes of master workers for $ \{\allhits_N{} \cup \allhits_Y{}\}$ as gold labels, which we would like our inference algorithm to be able to predict.
As all \tasks{} are of binary classification type, we consider uniform cost across \tasks{}.
Details are presented in \reftbl{tbl:AMT}. 

Our focus, in this work, is not to address conventional problem of truth estimation from noisy crowd workers.
We resort to simple majority voting technique in our analysis of noisy workers for structurally rich \systemfull{}.
For our experiments, we consider votes of master workers for $ \{\allhits_N{} \cup \allhits_Y{}\}$ as gold labels, which we would like our inference algorithm to be able to predict. 
However, we also acknowledge several sophisticated techniques that have been proposed, like Bayesian approach, weighted majority voting etc., which are expected to perform better.

\vspace{-2mm}
\subsubsection{Performance Evaluation Metrics}
Performance of various methods are evaluated using the following two metrics. 
%
To capture accuracy at the predicate level, we define $\predicatemetric{}$ as the average of difference between gold and estimated accuracy of each of the $R$ 
	predicates in KG.
\begin{small}
\begin{eqnarray*}
	\predicatemetric &  = & \frac{1}{|R|}\Bigg(\sum_{\forall r\in R}\Big|\Phi(\allhits{}_r)-\frac{1}{|\set{H{}}_r|} \sum_{\forall h \in \set{H{}}_r} l(h) \Big| \Bigg)
\end{eqnarray*}
\end{small}

We define $\overallmetric{}$ as the 
difference between gold and estimated accuracy over the entire evaluation set.
\begin{small}
\begin{eqnarray*}
		\overallmetric  & = &  \bigg|\Phi(\allhits{}) - \frac{1}{|\set{H}|}\sum_{\forall h \in \set{H}} l(h)\bigg|
\end{eqnarray*}
\end{small}

Above, $\Phi(\allhits{})$ is the overall gold accuracy, 
$\Phi(\allhits_r{})$ is the gold accuracy of predicate $r$
 and $l(h)$ is the label assigned by the currently evaluated method.
$\overallmetric{}$ treats entire KG as a single bag of \tasks{} whereas $\predicatemetric{}$ segregates beliefs based on their type of predicate-relation.
For both metrics, lower is better.


\subsection{Baseline Methods}
\label{sec:baselines}
Since accuracy estimation of large multi-relational KGs is a  
relatively unexplored problem, there are no well established baselines for this task (apart from random sampling). 
We present below the baselines which we compared against \system{}.

\noindent
\textbf{Random: }
Randomly sample a \task{} $h \in \hitset{H}$ without replacement and crowdsource for its correctness. 
Selection of every subsequent \task{} is independent of previous selections.

\noindent
\textbf{Max-Degree: }
Sort the \tasks{} in decreasing order of their degrees in \cgraph{} and select them from top for evaluation; this method favors selection of more centrally connected \tasks{} first.

Note that there is no notion of inferable set in the above two baselines. 
Individual \tasks{} are chosen and their evaluations are singularly added to compute the accuracy of KG.

\noindent
\textbf{Independent Cascade: }
This method is based on contagion transmission model where nodes only infect their immediate neighbors  \cite{kempe2003maximizing}.
%
At every time iteration $t$, we choose a \task{} which is not evaluated yet, crowdsource for its label  and let it propagate its evaluation label in \cgraph{}. 

\noindent
\textbf{\system{}: }
Method proposed in \refalg{alg:KGEalgo}. 

\subsection{Effectiveness of \system{}}
\label{sec:main_result}
%

\begin{table}[t]
\begin{small}
 \centering
	\begin{tabular}{|l|c|c|c|}
  	\hline
 
	\multicolumn{4}{|c|}{NELL sports dataset ($\set{H}_N$)} \\
	\hline
	Method & $\predicatemetric$ & $\overallmetric$  & \# Queries \\
	 & $(\%)$ & $(\%)$  &  \\
		\hline

	Random & $4.9$ & $1.3$ &623 \\	\hline
	Max-Deg & $7.7$ & $2.9$ & 1370 \\	\hline
	Ind-Casc & $9.8$ & $0.8$ & 232 \\	\hline
	\system{} & {\bf  3.6} & {\bf 0.5} & {\bf 140} \\		\hline
	\hline
	\multicolumn{4}{|c|}{Yago dataset ($\set{H}_Y$)} \\
	\hline
	Random & $1.3$ & $0.3$ & 513 \\	\hline
	Max-Deg & $1.7$ & $0.5$  & 550 \\	\hline
	Ind-Casc & $1.1$ & $0.7$ & 649 \\	\hline
	\system{} & {\bf 0.7} & {\bf 0.1} & {\bf 204} \\		
	\hline	
	\end{tabular}
 
	\caption{\label{tbl:converge_nell_yago} $\predicatemetric (\%)$ and $\overallmetric (\%)$ estimates (lower is better) of various methods with number of crowd-evaluated queries (\task{} evaluations) to reach the $\overallmetric{}$ converged estimate. \system{} uses the least number of \task{} evaluations while achieving best estimates. This is our main result. (See \refsec{sec:main_result})}
\end{small}
\end{table}


\begin{figure}[t]
\begin{small}
	\begin{center}
		\includegraphics[width=0.40\textwidth]{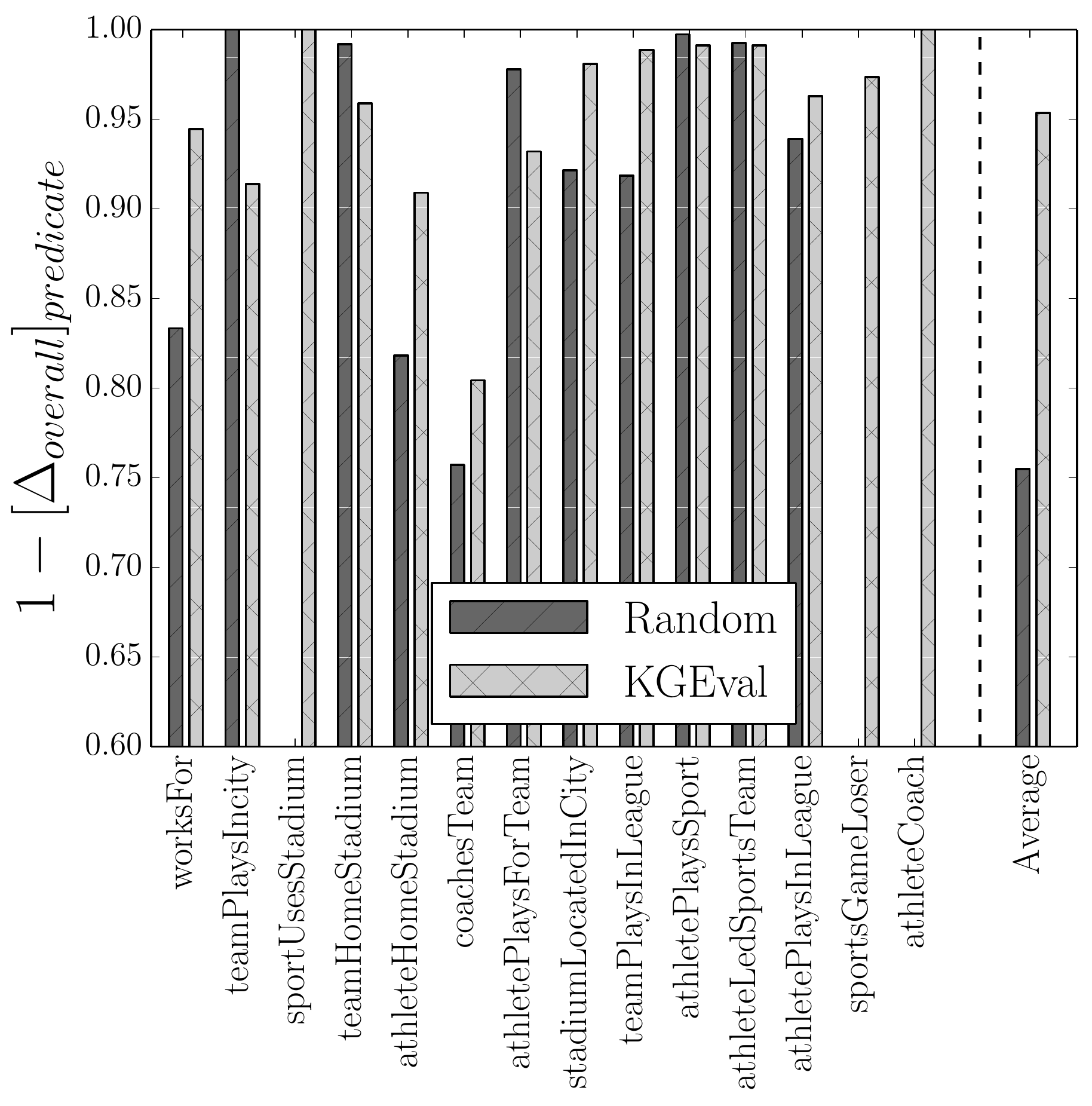}
		\caption{\small \label{fig:predicatewise}Comparing $(1 - [\overallmetric{}]_{predicate})$ of individual predicates (higher is better) in $\set{H}_N$ between \system{} and Random, the two top performing systems in \reftbl{tbl:converge_nell_yago}. $[\overallmetric{}]_{predicate}$ means $\overallmetric{}$ computed over beliefs of individual predicates. We observe that \system{} significantly outperforms Random in this task.
		(see \refsec{sec:main_result})}
	\end{center}
	\end{small}
	\end{figure}

Experimental results of all methods comparing $\overallmetric{}$ at convergence (see \refsec{sec:overall_algo} for definition of convergence), number of crowd-evaluated queries needed to reach convergence, and $\predicatemetric{}$ at that convergence point are presented in \reftbl{tbl:converge_nell_yago}. For both metrics, lower is better. From this table, we observe that \system{}, the proposed method, is able to achieve the best estimate across both datasets and metrics. This clearly demonstrates \system{}'s effectiveness. Due to the significant positive bias in $\allhits_Y{}$ (see \reftbl{tbl:AMT}), all methods do fairly well as per $\overallmetric{}$ on this dataset, even though \system{} still outperforms others.

From this table, we also observe that \system{} is able to estimate KG accuracy most accurately while utilizing least number of crowd-evaluated queries. This clearly demonstrates \system{}'s effectiveness. We note that Random is the second most effective method.

\subsubsection*{Predicate-level Analysis} For this analysis we consider the top two systems from \reftbl{tbl:converge_nell_yago}, viz., Random and \system{}, and compare performance on the $\set{H}_N$ dataset. We use $(1 - [\overallmetric{}]_{predicate})$ as the metric (higher is better), where $[\overallmetric{}]_{predicate}$ means $\overallmetric{}$ computed over beliefs of individual predicates. Here, we are interested in evaluating how well the two methods have estimated per-predicate accuracy when \system{}'s $\overallmetric{}$ has converged, i.e., after each method has already evaluated 140 queries (see $6^{th}$ row in \reftbl{tbl:converge_nell_yago}). Experimental results comparing per-predicate performances of the two methods are shown in \reffig{fig:predicatewise}. From this figure, we observe that \system{} significantly outperforms Random. \system{}'s advantage lies in its exploitation of coupling constraints among beliefs, where evaluating a belief from certain  predicate helps infer beliefs from other predicates. As Random ignores all such dependencies, it results in poor estimates even at the same level of evaluation feedback.

\subsection{Effectiveness of Coupling Constraints}
\label{sec:effective_constraint}
\label{sec:ablationExp}

\begin{table}[t]
\begin{center}
\begin{small}
 \begin{tabular}{ |l|c|c| }
 \hline
 Constraint Set & Iterations to & $\overallmetric (\%)$ \\
 & Convergence & \\
	 \hline
	 $\cconstraint{}$    &  {\bf 140}  & {\bf 0.5} \\ 
	 $\cconstraint{} - \cconstraint{}_{b3} $    &  $259$  & $0.9$\\
	 $\cconstraint{} - \cconstraint{}_{b3} - \cconstraint{}_{b2}$    &  $335$  & $1.1$\\
	 \hline
\end{tabular}
	\caption{\small \label{tbl:ablation}Performance of \system{} with ablated constraint sets. 
Additional constraints help in better estimation with lesser iterations.(see \refsec{sec:effective_constraint})}
\end{small}
\end{center}	
\end{table}

\noindent
This paper is largely motivated by the thesis --  \textit{exploiting richer relational couplings among \tasks{} may result in faster and more accurate evaluations}. 
To evaluate this thesis, we successively ablated Horn clause coupling constraints of body-length 2 and 3 from $\cconstraint_N{}$. 


We observe that with the full (non-ablated) constraint set $\cconstraint_N{}$, \system{} takes least number of crowd evaluations of \task{}s to convergence,  while providing best accuracy estimate. 
Whereas with ablated constraint sets, \system{} takes up to $2.4$x  more crowd-evaluation queries for convergence.
These results validate our thesis that exploiting relational constraints among  \tasks{} leads to effective accuracy estimation.

\subsection{Additional Experiments}
\label{sec:ablationExp}

Unless otherwise stated, we perform experiments in this section over $\set{H}_N$ using $\overallmetric$ on convergence.

\subsubsection*{Effectiveness of \system{}'s Control Mechanism} 
We note that Random and Max-degree may be considered control-only mechanisms as they don't involve any additional inference step. In order to evaluate how these methods may perform in conjunction with a fixed inference mechanism, we replaced \system{}'s greedy control mechanism in Line 5 of \refalg{alg:KGEalgo} with these two control mechanism variants. We shall refer to the resulting systems as \textit{Random+inference} and \textit{Max-degree+inference}, respectively.

	First, we observe that both \textit{Random + inference} and \textit{Max-degree + inference} are able to estimate accuracy more accurately than their control-only variants. Secondly, even though the accuracies estimated by \textit{Random+inference} and \textit{Max-degree+inference} were comparable to that of \system{}, they required significantly larger number of crowd-evaluation queries -- $1.2$x and $1.35$x more, respectively. 
	This demonstrates \system{}'s effective greedy control mechanism.

\noindent
\subsubsection*{Rate of Coverage }
%
\label{subsec:coverage_budget}
\begin{figure}[t]
	\begin{center}
		\includegraphics[width=0.3\textwidth]{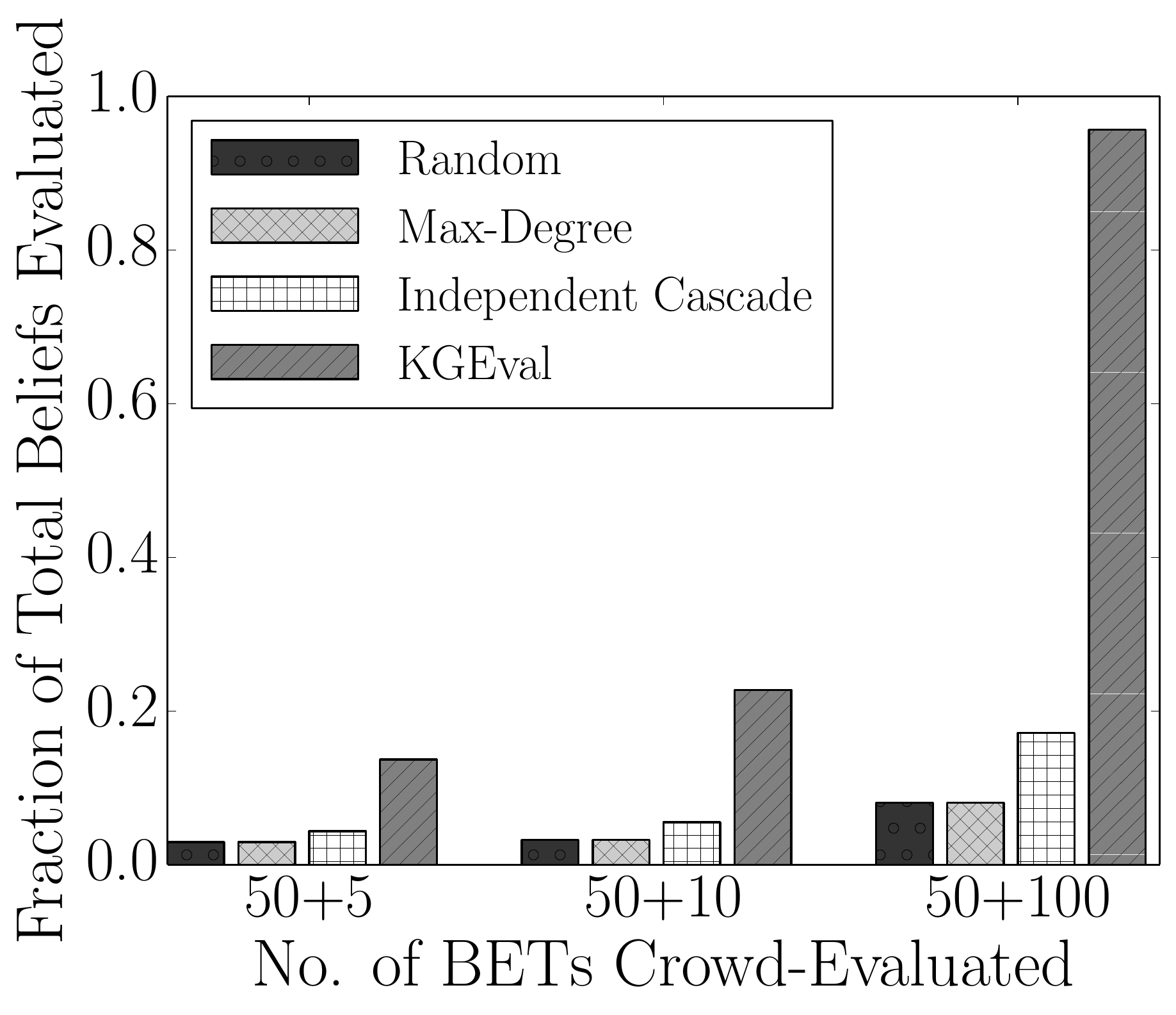}\label{fig:NELL cover}
		
		\caption{\small  \label{fig:main_result}Fraction of  
		total beliefs whose evaluation where automatically inferred by different methods for varying number of crowd-evaluated queries (x-axis) in  $\set{H}_N$. 
		\system{} automatically evaluates the largest number of \tasks{} at each level (see \refsec{subsec:coverage_budget}). 
		} 
	\end{center}
\end{figure}
\reffig{fig:main_result} shows the fraction of total beliefs 
whose evaluations were automatically inferred by different methods as a function of number of crowd-evaluated beliefs. We observe that \system{} infers evaluation for the largest number of \tasks{} at each supervision level. 
Such fast coverage with lower number of crowdsource queries is particularly desirable in case of  large KGs with scarce budget.




\noindent
\subsubsection*{Robustness to Noise }
\noindent
In order to test robustness of the methods in estimating accuracies of KGs with different gold accuracies, we artificially added noise to $\set{H}_N$ by flipping a fixed fraction of edges, otherwise following the same evaluation procedure as in \refsec{sec:overall_algo}.
We chose triples which were evaluated to be true by crowd workers and also had functional predicate-relation to ensure that the artificially generated beliefs were indeed false. This resulted in variants of $\set{H}_N$ with gold accuracies in the range of $81.3\%$ to $91.3\%$. 
We analyze $\overallmetric{}$ (and not $\predicatemetric{}$) because flipping edges in KG distorts predicate-relations dependencies. 

We evaluated all the methods and observed that while performance of other methods degraded significantly with diminishing KG quality (more noise), \system{} was significantly robust to noise.
Across baselines \system{} best estimated $\overallmetric = 0.8\%$ with $156$ \tasks{}, whereas the next best baseline took $370$ \tasks{} to give $\overallmetric = 1.2\%$.
%
%
%
This along with \system{}'s performance on Yago2, a KG with naturally high gold accuracy, suggests that \system{} is capable of successfully estimating accuracies of KGs with wide variation in quality (gold accuracy).

\subsection{Scalability and Run-time }
\label{subsec:scalable}

\noindent

\subsubsection*{Scalability comparisons with MLN} 
Markov Logic Networks (MLN) \cite{richardson2006MLN} is a probabilistic logic which can serve as another candidate for our Inference Mechanism. In order to compare the runtime performance of \system{} with PSL and MLN as inference engines, respective; we experimented with a subset of the NELL dataset with 1860 \tasks{} and 130 constraints. 
We compared the runtime performance of \system{} with PSL and MLN as inference engines.
While PSL took 320 seconds to complete one iteration, the MLN implementation\footnote{ \scriptsize \url{pyLMN: http://ias.cs.tum.edu/people/jain/mlns}} could not finish grounding the rules even after 7 hours. 
This justifies our choice of \psl{} as the inference engine for \system{}.

\subsubsection*{Parallelism} 
Computing $\inferset(G,\set{Q} )$ for varying $\set{Q}$ involves solving independent optimization function.
The greedy step, which is also the most computationally intensive step, can easily be parallelized by distributing calculation of $\inferset(G,\set{Q} \cup h_i )$ among $i$ different computing nodes.
The final aggregator node selects $\argmax_i |\inferset(G,\set{Q}\cup h_i )|$.

\subsubsection*{Computational Optimization }
\label{sec:optim}
Grounding of all first-order logic rules and maximizing their posterior probabilities, as in \refeqn{eqn:MLE}, is computationally expensive. 
\psl{} inference engine uses Java implementation of hinge-loss Markov random fields (hl-mrf) to find the most probable explanation\cite{bach:uai13}.
It also uses relational database for efficient retrieval during rule grounding \cite{broecheler:uai10}.

\section{Related Work}
\label{sec:related}
Most of the previous work on evaluation of large scale KGs has resorted to random sampling, whereas crowdsourcing research has typically considered atomic allocation of tasks wherein the requester posted Human Intelligence Tasks (HITs) independently. In estimating the accuracy of knowledge bases through crowdsourcing, we find the task of knowledge corroboration \cite{kasneci2010bayesian} to be closely aligned with our motivations. 
This work proposes probabilistic model to utilize a fixed set of basic first-order logic rules for label propagation. 
However, unlike \system{}, it does not look into the budget feasibility aspect and does not try to reduce upon the number of queries to crowdsource.

Most of the other research efforts along this line have gone into modeling individual workers and minimizing their required redundancy. 
They are mainly focused on getting a better hold on user's behavior and use it to further get better estimates of gold truth \cite{welinder2010multidimensional}.
Recent improvements use Bayesian techniques \cite{kamar2012combining,simpson2011bayesian} for predicting accuracy of classification type HITs, but they operate in much simpler atomic setting. None of them relate the outputs of HITs to one another and do not capture the relational complexities of our \systemfull{}. 

There have been models named Find-Fix-Verify which break large complex tasks, such as  editing erroneous text, into modular chunks of simpler HITs and deal with these three inter-dependent tasks \cite{bernstein2010soylent,karger2014budget}.
The kind of inter-dependency among the three micro-tasks is very specific in the sense that output of previous stage goes as input to the next stage and cost analysis, workers Allocation and performance bounds over this model are done \cite{tran2014budgetfix}.
Our model transcends this restrictive linear dependence and is more flexible/natural.
Decision theoretic approaches on constrained workflows have been employed to obtain high quality output for minimum allocated resources \cite{kolobov2013joint,bragg2013crowdsourcing}.
Crowdsourcing tasks, like collective itinerary planning \cite{zhang2012human,little2010turkit}, involves handling tasks with global constraints, but our notion of inter-dependence is again very different as compared to above model
More recent work on construction of hierarchy over domain concepts \cite{sun2015building}, top-k querying over noisy crowd data \cite{amarilli2015top}, multi-label image annotation from crowd \cite{duan2014separate,deng2014scalable}   involve crowdsourcing over dependent HITs but their goals and methods vary largely from ours.

Our model significantly differs from previous works in marketing theory \cite{goldenberg2001using}, outbreak detection \cite{leskovec2007cost} and social network analysis \cite{kempe2003maximizing} etc., as it operates over multi-relational modes of inference and not just singular way of connecting two entities.
%

Work on budget sensitive algorithm \cite{karger2014budget,azaria2012automated,tran2013efficient} provides performance guarantees over several cost models, but do not account for any inter-relation among tasks.

In large scale crowdsourcing, recent works have highlighted case for active learning \cite{mozafari2014scaling}. However, unlike our selection based on relational dependence at instance level tasks, active learning selection is based upon ranking generated over classifiers.

\section{Conclusion}
\label{sec:conclude}

\vspace{-1mm}
In this paper, we have initiated a systematic exploration into the important yet relatively unexplored problem of accuracy estimation of automatically constructed Knowledge Graphs (KGs). We have proposed \system{}, a novel method for this problem. 
\system{} is an instance of a  novel crowdsourcing paradigm where dependencies among tasks presented to humans (belief evaluation in the current paper) are exploited. To the best of our knowledge, this is the first method of its kind.  
We demonstrated that the objective optimized by \system{} is in fact NP-Hard and submodular, and hence allows for the application of simple greedy algorithms with guarantees. 
Through extensive experiments on real datasets, we demonstrated effectiveness of \system{}. 

As part of future work, we hope to extend \system{} to incorporate varying evaluation cost, and also explore more sophisticated evaluation aggregation. 

\bibliographystyle{abbrv}			
\bibliography{kgeval}



\appendix

\section{Appendix}
\label{sec:appendix}
%

\noindent

\begin{proof} {\bf (for Theorem \ref{thm:submod})}
The additional utility, in terms of label inference, obtained by adding a \task{} to larger set is lesser than adding it to any smaller subset. 
By construction, any two \tasks{} which share a common factor node $\cconstraint{}_j$ are encouraged to have similar labels in $G$.


%
%
%
%

%
%
%
%
Potential functions $\psi_j$ of \refeqn{eqn:MLE} satisfy pairwise regularity property 
i.e., for all \tasks{} $\{p,q\}\in \allhits{} $ 
	\begin{equation}
		\label{eqn:regular}
		\psi(0,1) + \psi(1,0) \geq \psi(0,0) + \psi(1,1)
	\end{equation}
	where 
	$\{1,0\}$ represent true/false.
Equivalence of \textit{submodular} and \textit{regular} properties are established \cite{kolmogorov2004energy,jegelka2011submodularity}. 
%
Using non-negative summation property \cite{lovasz1983submodular}, $\sum_{j \in \cconstraint{} } \theta_j \psi_j$ is submodular for positive weights $\theta_j \geq 0$.


%
%
We consider a \task{} $h$ to be confidently inferred when the soft score of its label assignment in $\inferset(G, \set{Q})$ is greater than threshold \mbox{$\tau_h \in [0,1]$}.
From above we know that $\mathbb{P}(l(h)|\hitset{Q})$ is submodular with respect to fixed initial set $\hitset{Q}$.
%
Although $\max$ or $\min$ of submodular functions are not submodular in general, but \cite{kempe2003maximizing} conjectured that global function of \refeqn{eqn:rc_obj} is submodular if local threshold function $\mathbb{P}(h|\hitset{Q}) \geq \tau_h$
 respected submodularity, which holds good in our case of \refeqn{eqn:MLE}.
This conjecture was further proved in \cite{mossel2007submodularity} 
and thus making our global 
optimization function of \refeqn{eqn:rc_obj} submodular.
\end{proof}


\begin{proof} {\bf (for Theorem \ref{thm:nphard})}
We reduce \system{} to NP-complete Set-cover Problem (SCP) so as to select $\set{Q}$ which covers ${\inferset}(G, \set{Q})$.
For the proof to remain consistent with earlier notations, we define SCP by collection of subsets $\inferset_1{}, \inferset_2{}, \hdots, \inferset_m{}$ from set $\allhits{} = \{ h_1,h_2,\hdots, h_n\}$ and we want to determine if there exist $k$ subsets whose union equals $\allhits{}$. 
We define a bipartite graph with $m+n$ nodes corresponding to $\inferset_i{}$'s and $h_j$'s respectively and construct edge $(\inferset_i{},h_j)$ if $h_j \in \inferset_i{}$. 
We need to find a set $\set{Q}$, with cardinality k, such that $|\inferset(G, \set{Q})| \geq n+k$. 

Choosing our \task{}-set $\set{Q}$ from SCP solution and further inferring evaluations of other remaining \tasks{} using \psl{} will solve the problem in hand.
\end{proof}



\subsection{Noisy Crowd Workers and Budget}
\label{sec:budget_alloc}
%
Here, we provide a scheme to allot crowd workers so as to remain within specified budget and upper bound total error on accuracy estimate.
We have not integrated this mechanism with \refalg{alg:KGEalgo} to maintain its simplicity.
 
We resort to majority voting in our analysis and assume that crowd workers are not adversarial.
So expectation over responses $r_h(u)$ for a task $h$ with respect to multiple workers $u$ is close to its true label $t(h)$ \cite{tran2013efficient}, i.e.,

\begin{equation}
\label{eqn:error_expectation}
\bigg|\mathbb{E}_{u\sim \mathbb{D}(u,h)}[r_h(u)] - t{(h)}\bigg| \leq \frac{1}{2} 
\end{equation}
where $\mathbb{D}$ is  joint probability distribution of workers $u$ and tasks $h$.
%

Our key idea is that we want to be more confident about \tasks{} $h$ with larger inferable set  (as they impact larger parts of KG) and hence allocate them more budget to post to more workers. 
%
%
We determine the number  of workers $\{w_{h_1},\hdots,w_{h_n}\}$ for each task such that $h_t$ with larger inference set have higher $w_{h_t}$. 
%
%
%
For total budget $B$, 
we allocate 
\begin{center}
$w_{h_t} = \big\lfloor \frac{B~ i_t~(1-\gamma)}{ c~ i_{max}} \big\rfloor $
\end{center}
where $i_t$ denotes the cardinality of inferable set $\inferset(G,\set{Q} \cup  h_t)$, $c$ the cost of querying crowd worker, $i_{max}$ the size of largest inferable set and $\gamma \in [0,1]$ constant.

This allocation mechanism easily integrates with \refalg{alg:KGEalgo}; in (Line 8) we determine size of inferable set $i_t =|\set{Q}_t|$ for task $h$  and allocate $w_{h}$ crowd workers. Budget depletion (Line 9) is modified to $B_r = B_r - w_{h}c(h)$.
The following theorem bounds the error with such allocation scheme. 
\begin{theorem}
\label{thm:error_bound}
{\bf [Error bounds]} The allocation scheme 
of redundantly posing $h_t$ to $w_{h_t}$ workers does not exceed the total budget $B$ and its expected estimation error is upper bounded by $e^{-O(i_t)}$, keeping other parameters fixed. 
The expected estimation error over all tasks is upper bounded by $e^{-O(B)}$.
\end{theorem}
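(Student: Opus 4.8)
The plan is to split the statement into its three assertions and handle them in increasing order of difficulty: (i) the allocation never exceeds budget $B$; (ii) the per-task expected estimation error is $e^{-O(i_t)}$; (iii) the aggregate expected error is $e^{-O(B)}$.

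First I would verify the budget constraint. Since $w_{h_t} = \lfloor B\, i_t (1-\gamma) / (c\, i_{max}) \rfloor$, we have $w_{h_t} \le B\,i_t(1-\gamma)/(c\, i_{max})$, and the total cost is $\sum_t w_{h_t} c \le B(1-\gamma) \sum_t i_t / i_{max}$. The subtle point here is that $\set{Q}$ is chosen so its members have large inferable sets and the iterative process stops once coverage or convergence is reached, so the number of crowd-queried tasks is small relative to $i_{max}$; I would argue that the effective number of queried tasks times $(1-\gamma)$ is at most $1$ (this is what the slack parameter $\gamma$ buys us), hence the sum is at most $B$. This is really a bookkeeping step and I would keep it short, making the dependence on $\gamma$ explicit.

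Second, for the per-task error bound I would invoke a concentration argument. By the majority-voting assumption \refeqn{eqn:error_expectation}, each worker response $r_h(u)$ is an independent draw whose expectation is within $1/2$ of $t(h)$, so the worker responses have a bias bounded away from the decision boundary; averaging $w_{h_t}$ such bounded random variables and applying Hoeffding's inequality gives that the majority vote disagrees with $t(h)$ with probability at most $\exp(-\Omega(w_{h_t}))$. Substituting $w_{h_t} = \Theta(i_t)$ (with $B$, $c$, $i_{max}$, $\gamma$ held fixed) yields the claimed $e^{-O(i_t)}$ bound on the expected estimation error for task $h_t$. I expect the main obstacle to be stating the error model precisely enough that Hoeffding applies — in particular, making the gap between $\mathbb{E}[r_h(u)]$ and the threshold an absolute constant so that the exponent is linear in $w_{h_t}$; the inequality in \refeqn{eqn:error_expectation} is stated loosely, so I would tighten it to a strict bound $|\mathbb{E}_u[r_h(u)] - t(h)| \le \frac{1}{2} - \delta$ for some constant $\delta > 0$, or absorb this into the $O(\cdot)$ notation.

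Third, for the aggregate bound I would sum the per-task errors. The estimated accuracy is an average of the $l(h)$ over evaluated tasks; a task contributes an error only if its majority vote flips, which happens with probability $e^{-O(i_t)}$. Summing (or taking a union bound) over tasks, the dominant term is controlled by the task with the largest $w_{h_t}$, i.e. $i_t = i_{max}$, for which $w_{h_{max}} = \lfloor B(1-\gamma)/c \rfloor = \Theta(B)$, giving expected total error $e^{-O(B)}$; the remaining tasks contribute geometrically smaller terms that I would bound by the same geometric-series argument used implicitly in the allocation. I would present this as: expected total error $\le \sum_t \Pr[\text{flip on }h_t] \le \sum_t e^{-O(i_t)} = e^{-O(i_{max})} \cdot (\text{bounded sum}) = e^{-O(B)}$, since $i_{max} = \Theta(B)$ up to the fixed constants $c$ and $\gamma$. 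The one thing to be careful about is whether we want the error of the \emph{accuracy estimate} (a normalized average) or the total count of misclassifications; for the normalized quantity the $1/|\set{Q}|$ factor only helps, so the same bound holds a fortiori.
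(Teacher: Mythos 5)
Your step (ii) is essentially the paper's argument: Hoeffding--Azuma over the $w_{h_t}$ i.i.d.\ responses with an error margin $\varepsilon_t$, and $w_{h_t}$ linear in $i_t$ once $B$, $c$, $i_{max}$, $\gamma$ are fixed; your remark that \refeqn{eqn:error_expectation} needs a strict gap is fair, and the paper handles it by carrying the margin $\varepsilon_t$ as a parameter rather than tightening the inequality. The genuine gap is in your step (iii). In the sum $\sum_t e^{-O(i_t)}$ the dominant contribution comes from the task with the \emph{smallest} inferable set, not the largest: a larger $w_{h_t}$ makes that term exponentially smaller, so your claim that the remaining tasks ``contribute geometrically smaller terms'' is backwards, and the factorization $\sum_t e^{-O(i_t)} = e^{-O(i_{max})}\cdot(\text{bounded sum})$ fails because the discarded factors $\exp\bigl(2B(1-\gamma)\varepsilon_t^2\,(i_{max}-i_t)/(c\,i_{max})\bigr)$ themselves grow exponentially in $B$. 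Note also that $i_{max}$ is a property of the ECG and does not scale with $B$; what scales with $B$ is $w_{h_t}$. The paper's route is the straightforward one you should take: bound every per-task term by the worst case, $\Delta(B) \le n\cdot 2\exp\bigl(-2B\,i_{min}(1-\gamma)\varepsilon_{min}^2/(c\,i_{max})\bigr)$, and absorb $n$, $i_{min}/i_{max}$, and $\varepsilon_{min}$ into the constants to conclude $e^{-O(B)}$.

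Your budget step is also vaguer than what is needed. From $\sum_t c\,w_{h_t} \le B(1-\gamma)\sum_t i_t/i_{max}$ you must show $\sum_t i_t \le i_{max}/(1-\gamma)$; your sufficient condition that the number of queried tasks times $(1-\gamma)$ is at most $1$ is a cruder and unjustified assumption. The paper gets the needed inequality from an explicit modeling assumption: under the greedy control mechanism $i_1 = i_{max}$ and the inferable-set sizes are approximated as decaying geometrically, $i_{t+1} = \gamma\,i_t$, so that $\sum_{t\le T} i_t = i_{max}(1-\gamma^T)/(1-\gamma)$ and the total cost is $B(1-\gamma^T) \le B$. Identifying (and flagging) this geometric-decay approximation is the missing ingredient in your bookkeeping step.
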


\begin{proof} 
Let $\gamma \in [0,1]$  control the reduction in size of inferable set by \mbox{ $i_{t+1} = \gamma~i_t$}.
By allocating $w_{h_t}$ redundant workers for task $h_t,\forall t\in \{1\cdots n\}$  with size of inferable set $i_t$, we incur total cost of
\begin{eqnarray*}
\sum_{t=1}^n c~w_{h_t}  & = & \sum_{t=1}^n \frac{B ~ i_t ~(1-\gamma)}{c ~ i_{max}} \cdot c \\
& = & \left( \sum_{t=1}^n  i_t \right) \cdot \left( \frac{B~(1-\gamma)}{i_{max}} \right) \\
\end{eqnarray*}

For greedy control mechanism, $i_1=i_{max}$. Suppose we approximate the reduction in size of inferable set by a diminishing factor of $\gamma \in (0,1)$ i.e $i_{t+1} = \gamma~i_t$.
\begin{eqnarray*}
& = & \left( \frac{i_{max}~(1-\gamma^T)}{(1-\gamma)} \right) \cdot \left( \frac{B~(1-\gamma)}{i_{max}} \right) \\
& \leq & B
\end{eqnarray*}
Note that the above geometric approximation, which practically holds true when decaying is averaged over few time steps, helps in getting an estimate of $\sum_{t=1}^T  i_t$ at iteration $t \leq T$. 
Such approximation would not be possible unless we already ran the entire experiment.
For practical implementation, we can use an educated guess of $\gamma= 1 - \frac{i_{max}}{\mathrm{avg}_t(i_t)}$.

%
%
%

\noindent
{\bf Error Bounds: } 
Here we show that the expected error of estimating of $h_t$ for any time $t$ decreases exponentially in the size of inferable set $i_t$. 
We use majority voting to aggregate $w_{h_t}$ worker responses for $h_t$, denoted by $\hat{r}_{h_t} \in \{0,1\}$
	\begin{equation}
	\hat{r}_{h_t} = \Bigg\lfloor \frac{1}{w_{h_t}} \sum_{k=1}^{w_{h_t}} r_{h_t}(u_k) - \frac{1}{2}  \Bigg\rfloor + 1
	\end{equation}
where $r_{h_t}(u_k)$ is the response by $k^{th}$ worker for $h_t$.
%
The error from aggregated response can be  given by 
$\Delta({h_t}) = | \hat{r}_{h_t} - t(h_t) |$, where $t(h_t)$ is its true label. 
From \refeqn{eqn:error_expectation} and Hoeffding-Azuma bounds over $w_{h_t}$ i.i.d responses and error margin $\varepsilon_t $, we have 
\begin{eqnarray*}
\label{eqn:hoeffding}
\Delta({h_t}) & = & \mathbb{P}\Bigg\lbrace\Bigg| \frac{1}{w_{h_t}} \sum_{k=1}^{w_{h_t}} r_{h_t}(u_k) - \mathbb{E}(r_{h}(u))\Bigg| \geq \varepsilon_t  \Bigg\rbrace \\
& \leq & 2e^{-2w_{h_t}\varepsilon_t^2} \\
& = & 2~ \mathrm{exp} \left(-2~\frac{B ~ i_t ~(1-\gamma)}{c ~ i_{max}}~\varepsilon_t^2 \right)
\end{eqnarray*}
For fixed budget $B$ and given error margin $\varepsilon_t$, we have $\Delta({h_t})=e^{-O(i_t)}$.
%
%
%
Summing up over all tasks $t$, by union bounds we get the total expected error from absolute truth as $\Delta(B) =  \sum_{t=1}^n \Delta({h_t})$.
\begin{eqnarray*}
\Delta(B) & \leq & \sum_{t=1}^n  2~ \mathrm{exp} \left(-2~\frac{B ~ i_t ~(1-\gamma)}{c ~ i_{max}}~\varepsilon_t^2 \right) \\
& \leq & n \cdot 2~ \mathrm{exp} \left(-2~\frac{B ~ i_{min} ~(1-\gamma)}{c ~ i_{max}}~\varepsilon_{min}^2 \right)
\end{eqnarray*}
The accuracy estimation error will decay exponentially with increase in total budget for fixed parameters.
\end{proof}



\end{document}